\documentclass{article}

 \usepackage[preprint]{neurips_2026}

\usepackage[utf8]{inputenc} 
\usepackage[T1]{fontenc}    
\usepackage[colorlinks = true,
            linkcolor = blue,
            urlcolor  = blue,
            citecolor = blue,
            anchorcolor = blue]{hyperref}      
\usepackage{url}            
\usepackage{booktabs}       
\usepackage{amsfonts}       
\usepackage{nicefrac}       
\usepackage{microtype}      
\usepackage{xcolor}         

\setcitestyle{authoryear}

\usepackage{amsmath,amsfonts,bm}

\def\eqref#1{equation~\ref{#1}}

\def\1{\bm{1}}

\DeclareMathAlphabet{\mathsfit}{\encodingdefault}{\sfdefault}{m}{sl}
\SetMathAlphabet{\mathsfit}{bold}{\encodingdefault}{\sfdefault}{bx}{n}

\newcommand{\Var}{\mathrm{Var}}

\newcommand{\Cov}{\mathrm{Cov}}

\usepackage{amsthm}
\usepackage{multirow}
\usepackage{graphicx}
\usepackage[most]{tcolorbox}
\usepackage{color, colortbl}
\usepackage{subcaption}
\theoremstyle{plain}
\newtheorem{proposition}{Proposition}
\definecolor{lightgray}{gray}{0.95}
\definecolor{lightblue}{RGB}{230,240,255}

\title{DVAO: Dynamic Variance-adaptive Advantage Optimization for Multi-reward Reinforcement Learning}

\author{%
  Guochao Jiang\thanks{Corresponding author.}, Jingyi Song, Guofeng Quan, Chuzhan Hao, Guohua Liu, Yuewei Zhang \\
Alibaba Cloud Computing \\
\texttt{anyue.jgc@alibaba-inc.com}
}

\begin{document}

\maketitle

\begin{abstract}
  Reinforcement Learning has become a standard paradigm for aligning Large Language Models with human intent and task requirements. While Group Relative Policy Optimization offers an efficient, value-model-free alternative to Proximal Policy Optimization, adapting it to real-world multi-reward settings remains challenging. Standard scalarization practices, such as Reward Combination and Advantage Combination, suffer from significant drawbacks: Reward Combination frequently generates advantages with excessively large squared magnitudes that lead to training instability, while Advantage Combination relies on static hyperparameters and ignores cross-objective correlations. To address these limitations, we propose \textbf{D}ynamic \textbf{V}ariance-adaptive \textbf{A}dvantage \textbf{O}ptimization (\textbf{DVAO}), which dynamically adjusts combination weights based on the empirical reward variance of each objective within a rollout group, effectively up-weighting objectives with a stronger learning signal while suppressing noisy ones. We mathematically prove that DVAO maintains bounded advantage magnitudes for stable training and introduces a self-adaptive cross-objective regularization mechanism. Extensive experiments on mathematical reasoning and tool-use benchmarks using Qwen3 and Qwen2.5 models demonstrate that DVAO significantly outperforms baseline methods, achieving a superior multi-objective Pareto frontier and robust training stability.
\end{abstract}

\section{Introduction}
Large Language Models (LLMs) have demonstrated remarkable capabilities across a wide range of natural language processing tasks \citep{llm-survey}, including Qwen3 \citep{qwen3}, Kimi K2.5 \citep{k2.5}, and DeepSeek-R1 \citep{deepseek-r1}. To align these models with human intent and specific task requirements, Reinforcement Learning (RL) has become a standard paradigm \citep{rl-survey, sft-rl}. Recently, Group Relative Policy Optimization (GRPO) \citep{grpo} and its variants \citep{dapo, gspo, vcrl} have emerged as highly efficient alternatives to Proximal Policy Optimization (PPO) \citep{ppo} for LLMs. By eliminating the need for a separate value model and relying instead on relative advantage estimation within a sampled group of rollouts, GRPO significantly reduces memory overhead and simplifies the training pipeline \citep{liu2025part}.

However, deploying LLMs in real-world scenarios rarely involves optimizing a single, isolated metric. Practical applications dictate multi-objective requirements: a model must not only provide accurate answers but also adhere to length constraints \citep{efficient_survey1, efficient_survey2}, minimize bug rates in code generation \citep{DBLP:journals/ese/TambonDNKDA25, gao2025survey}, maintain a low hallucination rate \citep{hallucination_survey1, hallucination_survey2}, and keep correct tool-calling format in tool-use \citep{search_r1, airrag}. Adapting GRPO to this multi-reward setting is non-trivial. The standard practice involves scalarization-either linearly combining the raw rewards (Reward Combination) or independently normalizing the rewards and then combining their respective advantages (Advantage Combination).

Despite their widespread use, both methods suffer from significant theoretical and practical drawbacks. As we demonstrate in this work, the Reward Combination method frequently generates advantages with excessively large squared magnitudes than the Advantage Combination method, which translates to erratic policy gradients and training instability. Conversely, while the Advantage Combination method normalizes these magnitudes, it relies on static hyperparameters and completely isolates the objectives during normalization. This naive decoupling fails to capture the intricate correlations—whether synergistic or antagonistic—between different objectives during a single rollout, often leading to suboptimal trade-offs.

To address these fundamental limitations, we propose \textbf{D}ynamic \textbf{V}ariance-adaptive \textbf{A}dvantage \textbf{O}ptimization (\textbf{DVAO}). DVAO elegantly bridges the gap between stability and objective synergy by dynamically adjusting the combination weights based on the empirical reward variance of each objective within the rollout group. This completely data-driven method up-weights objectives with higher variance—indicating a stronger learning signal—while suppressing noisy, low-variance objectives. Crucially, we mathematically prove that DVAO not only bounds the advantage magnitude for stable training but also introduces a self-adaptive cross-objective regularization mechanism. In DVAO, the gradient contribution of a single objective is modulated by the overall multi-objective performance of that specific rollout, ensuring a holistic optimization trajectory.


In summary, we theoretically expose the fundamental flaws of existing scalarization methods in multi-reward GRPO—namely magnitude explosion and objective isolation—and propose Dynamic Variance-adaptive Advantage Optimization to address these limitations. DVAO is a fully dynamic, hyperparameter-free weighting scheme that we mathematically prove maintains bounded advantage magnitudes while introducing an implicit cross-objective regularization mechanism to promote synergistic learning. Extensive empirical evaluations on mathematical reasoning and tool-use benchmarks demonstrate that DVAO significantly outperforms baseline methods, accelerating convergence and consistently achieving a superior multi-objective Pareto frontier without sacrificing robust training stability.
\section{Preliminaries}
Recently, GRPO \citep{grpo} and its variants, including Dynamic Sampling Policy Optimization (DAPO) \citep{dapo} and Group Sequence Policy Optimization (GSPO) \citep{gspo}, have become widely used algorithms for policy optimization due to their simplicity and efficiency. Unlike Proximal Policy Optimization (PPO) \citep{ppo}, GRPO gains more flexibility by eliminating the value model and using relative advantage within group.

GRPO initially calculates the relative advantage for a single reward and then performs policy optimization. However, real-world tasks often have multi-objective requirements. In addition to the accuracy of the task itself, there may be other requirements, such as output length \citep{flashthink, liu2025learn, aggarwal2025l1}, bug rate of generated code \citep{DBLP:journals/ese/TambonDNKDA25, DBLP:conf/icsm/Gao25}, hallucination rate of output content, and correct function call in tool-use \citep{torl, logic_rl}. To adapt to GRPO, the usual solution is to combine the rewards corresponding to multiple objectives to form a final reward for policy optimization.

Formally, given a dataset $\mathcal{D}$, $x$ is the query and $y$ is the response. For the policy model $\pi_\theta$ parameterized by $\theta$, the likelihood by the policy model $\pi_\theta$ is given by $\pi_\theta(y|x) = \prod_{t=1}^{|y|}\pi_\theta(y_i | x, y_{<t})$. In a multi-reward setting, there are $n$ reward functions $r_1, r_2, \cdots, r_n$ for independent objectives. For a given input-output pair $(x_i, y_j)$, the corresponding reward is denoted as $r_k^{(i,j)} = r_k(x_i, y_j) \in [0,1], k=1,2,\cdots,n$. In the usual practice, the reward $r$ ultimately used for strategy optimization is a convex combination of the various reward component:
\begin{align}
    r_\text{sum}^{(i,j)} = w_1r_1^{(i,j)} + w_2r_2^{(i,j)} + \cdots + w_nr_n^{(i,j)} = \sum_{k} w_k r_k^{(i,j)}, \sum_k w_k = 1, w_k \in [0, 1],
\end{align}
where $w_k$ is the weight hyperparameter corresponding to $r_k$.

For GRPO, each input $x_i$ will sample $G$ rollouts $y_1, y_2, \cdots, y_G$ to calculate the relative advantage:
\begin{align}
    A_\text{sum}^{(i,j)} = \frac{r_\text{sum}^{(i,j)} - \text{mean} \left(\left\{ r_\text{sum}^{(i,j)} \right\}_{j=1}^G\right)}{\text{std} \left(\left\{ r_\text{sum}^{(i,j)} \right\}_{j=1}^G\right)}.\label{reward_combination}
\end{align}

The corresponding policy optimization objective for GRPO can be expressed as:
\begin{align}
    \mathcal{J}_\text{GRPO}(\theta) &= \mathbb{E}_{x_i \sim \mathcal{D},\left\{y_j\right\}_{j=1}^G \sim \pi_\theta(\cdot | x_i)} \notag \\ &\left[ \frac{1}{G} \sum_{j=1}^G \frac{1}{|y_j|} \sum_{t=1}^{|y_j|} \min \left( s_{j,t}(\theta) A_\text{sum}^{(i,j)}, \text{clip}(s_{j,t}(\theta), 1 - \epsilon, 1+\epsilon)A_\text{sum}^{(i,j)} \right) \right],\label{GRPO}
\end{align}
where $s_{j,t}(\theta) = \frac{\pi_\theta (y_{j,t} | x_i, y_{j, <t})}{\pi_{\theta_\text{old}} (y_{j,t} | x_i, y_{j, <t})}$ is the importance sampling ratio and $\epsilon$ is the clipping range. For clarity, we omit the KL divergence term. The corresponding gradient is as follows:
\begin{align}
    \nabla_\theta \mathcal{J}_\text{GRPO}(\theta) &= \mathbb{E}_{x_i \sim \mathcal{D},\left\{y_j\right\}_{j=1}^G \sim \pi_\theta(\cdot | x_i)} \notag \\ \frac{1}{G} \sum_{j=1}^G \frac{1}{|y_j|} \sum_{t=1}^{|y_j|} & \min \left( s_{j,t}(\theta) A_\text{sum}^{(i,j)}, \text{clip}(s_{j,t}(\theta), 1 - \epsilon, 1+\epsilon)A_\text{sum}^{(i,j)} \right) \nabla_\theta \log \pi_\theta (y_{j,t} | x_i, y_{j, <t}).\label{GRPO_gradient}
\end{align}

Another common multi-reward policy optimization method focuses on convex combinations of advantages rather than rewards, such as Group reward-Decoupled Normalization Policy Optimization (GDPO) \citep{gdpo}. Specifically, the independent reward for each objective is calculated as an independent advantage in a manner similar to GRPO, and these advantages are then combined to obtain the advantage used for policy optimization: 
\begin{align}
    A_1^{(i,j)} = \frac{r_1^{(i,j)} - \text{mean} \left(\left\{ r_1^{(i,j)} \right\}_{j=1}^G\right)}{\text{std} \left(\left\{ r_1^{(i,j)} \right\}_{j=1}^G\right)}, \cdots, A_n^{(i,j)} = \frac{r_n^{(i,j)} - \text{mean} \left(\left\{ r_n^{(i,j)} \right\}_{j=1}^G\right)}{\text{std} \left(\left\{ r_n^{(i,j)} \right\}_{j=1}^G\right)}.\label{advantage_combination}
\end{align}

Then, these individual advantages are combined using a similar convex combination method to obtain a single advantage result:
\begin{align}
    A^{(i,j)} = w_1A_1^{(i,j)} + w_2A_2^{(i,j)} + \cdots + w_nA_n^{(i,j)} = \sum_{k} w_k A_k^{(i,j)}, \sum_k w_k = 1, w_k \in [0, 1],\label{advantage_convex}
\end{align}
where $w_k$ is the weight hyperparameter corresponding to $A_k$. Based on $A^{(i,j)}$ and Equation \ref{GRPO}, policy optimization is performed to improve the performance of LLM for multiple objectives. GDPO further utilizes batch-wise advantage normalization to maintain training stability.

\section{Method}
In this section, we will first discuss the shortcomings of the reward combination and advantage combination methods discussed above, and then introduce our proposed DVAO method in detail.

\subsection{Reward Combination and Advantage Combination}
Having introduced both the reward combination method and the advantage combination method, a natural question arises: \textbf{which method produces a more effective gradient signal for policy optimization?} To answer this, we analyze the magnitude of the mean squared advantage, as the policy gradient is directly proportional to the advantage value in Equation \ref{GRPO_gradient}. Specifically, a larger advantage magnitude leads to a larger policy gradient update, which may cause training instability and hinder convergence in the multi-reward setting. To answer this, we have the following proposition.

\begin{proposition}
    For a fixed query $x_i$, let $\hat{\rho}_{kl}^i$ denote the sample correlation between $A_k$ and $A_l$ within the group rollout. The reward combination method and the advantage combination method satisfy:
    \begin{align}
        \frac{1}{G} \sum_{j=1}^G \left(A_\text{sum}^{(i,j)}\right)^2 \ge \frac{1}{G} \sum_{j=1}^G \left(A^{(i,j)}\right)^2 = \frac{1}{G} \sum_{j=1}^G \left(\sum_k w_k A_k^{(i,j)}\right)^2
    \end{align}
    with equality if and only if $\hat{\rho}_{kl}=1$ for all $k \neq l$.
\end{proposition}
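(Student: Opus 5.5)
The plan is to compute both sides exactly rather than bound them. The key observation is that every group-normalized advantage is a standardized vector. Throughout I take $\text{std}$ to be the group (population) standard deviation and assume each reward and $r_\text{sum}$ is non-constant within the group, so the normalizations are defined. If instead the $1/(G-1)$ convention is used, both sides get the same factor $(G-1)/G$ and the comparison is unchanged.

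\textbf{Step 1 (left-hand side).} By construction in Equation \ref{reward_combination}, $\{A_\text{sum}^{(i,j)}\}_{j=1}^G$ has group mean $0$ and group variance $1$. Hence
\begin{align}
\frac{1}{G}\sum_{j=1}^G \left(A_\text{sum}^{(i,j)}\right)^2 = 1,
\end{align}
regardless of the weights $w_k$.

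\textbf{Step 2 (right-hand side).} Each $A_k^{(i,\cdot)}$ from Equation \ref{advantage_combination} also has mean $0$ and $\frac{1}{G}\sum_j (A_k^{(i,j)})^2 = 1$. For standardized vectors, the sample correlation is simply the average product: $\hat\rho_{kl}^i = \frac{1}{G}\sum_j A_k^{(i,j)}A_l^{(i,j)}$, with $\hat\rho_{kk}^i = 1$. Expanding the square and exchanging the sums gives
\begin{align}
\frac{1}{G}\sum_{j=1}^G\left(\sum_k w_k A_k^{(i,j)}\right)^2 = \sum_{k}\sum_{l} w_k w_l \hat\rho_{kl}^i = \sum_k w_k^2 + \sum_{k\neq l} w_k w_l \hat\rho_{kl}^i .
\end{align}

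\textbf{Step 3 (comparison).} Since $w_k w_l \ge 0$ and $\hat\rho_{kl}^i \le 1$ (Cauchy--Schwarz), the right-hand side is at most $\sum_{k,l} w_k w_l = (\sum_k w_k)^2 = 1$. This establishes the inequality.

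\textbf{Step 4 (equality).} Equality holds iff $w_k w_l(1-\hat\rho_{kl}^i)=0$ for every pair $k\neq l$. This is exactly $\hat\rho_{kl}^i=1$ for all $k\neq l$ once the weights are strictly positive. I expect this step to be the only delicate point. Pairs involving a zero weight impose no constraint, so the ``only if'' direction requires $w_k>0$; I would state this as an implicit assumption, or phrase the condition as ``for all $k\neq l$ with $w_kw_l>0$.''

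The step I would also remark on is the degenerate case, where some $r_k$ is constant in the group and its $A_k$ is undefined (or set to $0$ by convention). With $A_k\equiv 0$, the right-hand side only decreases, so the inequality persists, but the equality characterization must exclude such objectives. This should be stated rather than proved.
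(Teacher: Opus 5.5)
Your proof is correct and follows the paper's argument essentially line by line: both observe that $\frac{1}{G}\sum_j \bigl(A_\text{sum}^{(i,j)}\bigr)^2 = 1$ by normalization, then expand the advantage-combination side as $\sum_k w_k^2 + 2\sum_{k<l} w_k w_l \hat{\rho}_{kl}^i = 1 - 2\sum_{k<l} w_k w_l \bigl(1-\hat{\rho}_{kl}^i\bigr) \le 1$ using $\sum_k w_k = 1$. Your remark that the ``only if'' direction of the equality case needs $w_k w_l > 0$ is a valid refinement: the paper allows $w_k \in [0,1]$ and leaves this point implicit.
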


This result reveals that the reward combination method, despite its simplicity, produces advantages with larger squared magnitude on average, leading to larger policy gradients. Although the advantage combination method achieves better results in the magnitude of the advantage, it fails to explicitly consider the correlation between multiple rewards. It is essentially equivalent to making a convex combination of the RL optimization objective composed of multiple independent rewards. Full proof is in Appendix~\ref{appendix:proof_of_proposition_1}.

Formally, based on Equation \ref{GRPO_gradient} and Equation \ref{advantage_convex}, without considering clipping range for brevity, we have:
\begin{align}
    \nabla_\theta \mathcal{J}_\text{GRPO}(\theta) &= \mathbb{E}_{x_i \sim \mathcal{D},\left\{y_j\right\}_{j=1}^G \sim \pi_\theta(\cdot | x_i)} \frac{1}{G} \sum_{j=1}^G \frac{1}{|y_j|} \sum_{t=1}^{|y_j|} s_{j,t}(\theta)A^{(i,j)} \nabla_\theta\log \pi_\theta (y_{j,t} | x_i, y_{j, <t}) \notag\\
    &= \sum_k w_k \nabla_\theta \mathcal{J}_\text{GRPO}(\theta)_k,
\end{align}
where $\nabla_\theta\mathcal{J}_\text{GRPO}(\theta)_k$ is the gradient of the RL optimization objective corresponding to $A_k^{(i,j)}$. Therefore, from the perspective of RL gradient, the advantage combination method does not explicitly take into account the correlation between multiple rewards. Furthermore, it is difficult to adjust the training intensity of different RL objectives during dynamic training with fixed hyperparameters of convex combination coefficients $\{w_k\}_{k=1}^n$.

\subsection{Dynamic Variance-adaptive Advantage Optimization}
The above discussion reveals that the reward combination method, despite its simplicity, produces advantages with larger squared magnitude on average, leading to larger policy gradients. While the advantage combination method alleviates this problem by decoupling the normalization of each objective, it still relies on fixed weights and does not explicitly introduce the correlation between multiple rewards, making it difficult to optimize multiple objectives as a whole. This motivates our proposed \textbf{D}ynamic \textbf{V}ariance-adaptive \textbf{A}dvantage \textbf{O}ptimization, namely \textbf{DVAO}, which further adapts the combination weights according to the reward variance of each objective. At the same time, DVAO has a better advantage magnitude than the reward combination method.

Formally, DVAO replaces the fixed combination weights $w_k$ with dynamic variance-adaptive weights $\tilde{w}_k = \frac{w_k\sigma_k^i}{\sum_l w_l \sigma_l^i}$, which up-weights objectives with higher reward variance and down-weights objectives with lower reward variance in a fully dynamic and data-driven manner, where $\sigma_k^i = \text{std} \left(\left\{ r_k^{(i,j)} \right\}_{j=1}^G\right)$ and $\sigma_\text{sum}^i = \text{std} \left(\left\{ r_\text{sum}^{(i,j)} \right\}_{j=1}^G\right)$ are the corresponding group standard deviations. The DVAO advantage is then computed as:
\begin{align}
    A_\text{DVAO}^{(i,j)} = \sum_k \tilde{w}_k A_k^{(i,j)} = \frac{\sum_k w_k \sigma_k^i A_k^{(i,j)}}{\sum_l w_l \sigma_l^i}. \label{DVAO}
\end{align}

To illustrate the advantage of DVAO over the reward combination method in terms of advantage magnitude, we have the following proposition:
\begin{proposition}
\label{proposition_2}
    For a fixed query $x_i$ and rollout group $\{y_j\}_{j=1}^G \sim \pi_\theta(\cdot | x_i)$, the reward combination method produces a pointwise larger advantage magnitude than DVAO:
    \begin{align}
        \left| A_\text{DVAO}^{(i,j)} \right| \le \left| A_\text{sum}^{(i,j)} \right|, \forall j \in \{1, 2, \cdots, G\}
    \end{align}
    with equality if and only if $\Cov\left(r_k^{(i,j)}, r_l^{(i,j)}\right) = \sigma_k^i \sigma_l^i$ for all $k \neq l$, i.e., all reward pairs are perfectly positively correlated within the rollout group.
\end{proposition}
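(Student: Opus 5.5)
The plan is to show that DVAO and the reward combination method share the same numerator, so the pointwise comparison reduces to a comparison of two scalar denominators that do not depend on $j$. By definition, $A_k^{(i,j)} = (r_k^{(i,j)} - \bar r_k^i)/\sigma_k^i$, where $\bar r_k^i$ is the group mean of $r_k$. Hence
\begin{align}
\sum_k w_k \sigma_k^i A_k^{(i,j)} = \sum_k w_k \bigl(r_k^{(i,j)} - \bar r_k^i\bigr) = r_\text{sum}^{(i,j)} - \text{mean}\left(\left\{ r_\text{sum}^{(i,j)} \right\}_{j=1}^G\right),
\end{align}
using linearity of the group mean. Therefore
\begin{align}
A_\text{DVAO}^{(i,j)} = \frac{r_\text{sum}^{(i,j)} - \text{mean}\left(\left\{ r_\text{sum}^{(i,j)} \right\}_{j=1}^G\right)}{\sum_l w_l \sigma_l^i},
\end{align}
while $A_\text{sum}^{(i,j)}$ has the same numerator divided by $\sigma_\text{sum}^i$. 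Proposition~\ref{proposition_2} then follows once we establish $\sigma_\text{sum}^i \le \sum_l w_l \sigma_l^i$ for all $j$ at once.

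For the denominator inequality, I will expand the group variance of the combined reward bilinearly:
\begin{align}
(\sigma_\text{sum}^i)^2 = \sum_{k,l} w_k w_l \Cov\left(r_k^{(i,j)}, r_l^{(i,j)}\right).
\end{align}
Here $\Cov$ denotes the group (sample) covariance, with the same normalization as $\text{std}$. By Cauchy--Schwarz for the empirical inner product on centered vectors, $\Cov(r_k, r_l) \le \sigma_k^i \sigma_l^i$. Since $w_k w_l \ge 0$, this gives
\begin{align}
(\sigma_\text{sum}^i)^2 \le \Bigl(\sum_k w_k \sigma_k^i\Bigr)^2.
\end{align}
Taking square roots yields the claim. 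Equivalently, this is the triangle inequality for the empirical standard deviation, which is a seminorm. Combining this with the shared numerator gives $|A_\text{DVAO}^{(i,j)}| \le |A_\text{sum}^{(i,j)}|$.

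The main obstacle I expect is the equality characterization together with degenerate cases, rather than the inequality itself. Equality in the sum holds iff $\Cov(r_k, r_l) = \sigma_k^i \sigma_l^i$ for every pair with $w_k w_l > 0$. This matches the stated condition under the implicit assumption that all $w_k > 0$; I would state that assumption explicitly, since pairs with a zero weight are unconstrained.

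Two further caveats need care. First, pointwise equality also holds trivially for any $j$ at which the common numerator vanishes. The "if and only if" should therefore be read as equality of the scaling factors, i.e.\ holding for all $j$ whenever the group is non-degenerate. Second, if some $\sigma_k^i = 0$, then $A_k$ is undefined. I would adopt the convention $\sigma_k^i A_k^{(i,j)} = 0$, which is consistent with the numerator identity since $r_k$ is then constant. I also need $\sigma_\text{sum}^i > 0$ for $A_\text{sum}$ to be defined, and that condition forces $\sum_l w_l \sigma_l^i > 0$ by the inequality just proved, so the DVAO advantage is well defined as well. I would add these conventions as a brief remark at the start of the proof.
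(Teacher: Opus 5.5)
Your proposal is correct and follows the paper's proof essentially step for step. Both arguments use the identity $\sigma_\text{sum}^i A_\text{sum}^{(i,j)} = \sum_k w_k \sigma_k^i A_k^{(i,j)}$, the Cauchy--Schwarz bound $\Cov(r_k, r_l) \le \sigma_k^i \sigma_l^i$ to get $\sigma_\text{sum}^i \le \sum_k w_k \sigma_k^i$, and then division by the smaller denominator. Your equality analysis is a sound refinement that the paper's proof omits: equality needs only pairs with $w_k w_l > 0$, it holds trivially where the common numerator vanishes, and you handle $\sigma_k^i = 0$. The paper's proof establishes only the inequality and never argues the ``only if'' direction.
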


Beyond the pointwise advantage magnitude comparison, we further analyze how DVAO and the advantage combination method differ in their sensitivity to the raw rewards of individual objectives. Full proof is in Appendix~\ref{appendix:proof_of_proposition_2}. This analysis provides a deeper understanding of how DVAO explicitly captures cross-objective interactions, a property that the standard advantage combination method fundamentally lacks. Specifically, we examine the partial derivative of the combined advantage with respect to the raw reward $r_k^{(i,j)}$. This derivative measures how the final advantage responds to a perturbation in the $k$-th objective's reward, reflecting the degree to which each objective influences the overall gradient signal. We have the following proposition:
\begin{proposition}
\label{proposition_3}
    For a fixed query $x_i$, and rollout group $\{y_j\}_{j=1}^G \sim \pi_\theta(\cdot | x_i)$, the sensitivity of the combined advantage with  respect to the $k$-th raw reward $r_k^{(i,j)}$ for the advantage combination method and DVAO are respectively given by:
    \begin{align}
        \frac{\partial A^{(i,j)}}{\partial r_k^{(i,j)}} &= \frac{w_k}{\sigma_k^i} \left( 1 - \frac{1}{G} - \frac{1}{G} \left(A_k^{(i,j)}\right)^2 \right), \\
        \frac{\partial A_\text{DVAO}^{(i,j)}}{\partial r_k^{(i,j)}} &= \frac{\tilde{w}_k}{\sigma_k^i} \left( 1 - \frac{1}{G} - \frac{1}{G} A_\text{DVAO}^{(i,j)} A_k^{(i,j)} \right).
    \end{align}
    While the sensitivity of $A^{(i,j)}$ strictly depends on the isolated advantage of the $k$-th objective, the sensitivity of $A_\text{DVAO}^{(i,j)}$ adaptively depends on the cross-term $A_\text{DVAO}^{(i,j)} A_k^{(i,j)}$, allowing it to aggregate global performance information across all objectives within the rollout group.
\end{proposition}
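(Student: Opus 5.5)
The plan is to differentiate both advantages directly. Throughout, I take $\text{std}$ to be the population (biased, $1/G$) group standard deviation; this is the convention under which the constant $1-\frac{1}{G}$ comes out as stated. I also assume $\sigma_k^i>0$ for every $k$, so that all quantities are differentiable. The whole computation rests on two elementary facts for a single objective $k$ with group mean $\mu_k^i=\frac1G\sum_{j'} r_k^{(i,j')}$:
\begin{align}
\frac{\partial \mu_k^i}{\partial r_k^{(i,j)}}=\frac1G, \qquad \frac{\partial \sigma_k^i}{\partial r_k^{(i,j)}}=\frac{r_k^{(i,j)}-\mu_k^i}{G\,\sigma_k^i}=\frac{A_k^{(i,j)}}{G}.
\end{align}
The second identity follows by differentiating $(\sigma_k^i)^2=\frac1G\sum_{j'}(r_k^{(i,j')}-\mu_k^i)^2$. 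The cross terms coming from the mean drop out because $\sum_{j'}(r_k^{(i,j')}-\mu_k^i)=0$. I also use that $r_k^{(i,j)}$ enters only the $k$-th objective's statistics, so $A_l^{(i,j)}$ and $\sigma_l^i$ do not depend on it for $l\neq k$.

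\textbf{Advantage combination.} Since $A^{(i,j)}=\sum_l w_l A_l^{(i,j)}$, only the $l=k$ term depends on $r_k^{(i,j)}$. Apply the quotient rule to $A_k^{(i,j)}=(r_k^{(i,j)}-\mu_k^i)/\sigma_k^i$. The numerator has derivative $1-\frac1G$, and the denominator contributes $-(r_k^{(i,j)}-\mu_k^i)\,\frac{A_k^{(i,j)}}{G}/(\sigma_k^i)^2=-\frac{1}{G\sigma_k^i}\left(A_k^{(i,j)}\right)^2$. Multiplying by $w_k$ gives the first formula.

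\textbf{DVAO.} The key simplification is to observe from Equation~\ref{DVAO} that $w_k\sigma_k^iA_k^{(i,j)}=w_k(r_k^{(i,j)}-\mu_k^i)$. Hence
\begin{align}
A_\text{DVAO}^{(i,j)}=\frac{r_\text{sum}^{(i,j)}-\text{mean}\left(\left\{r_\text{sum}^{(i,j)}\right\}_{j=1}^G\right)}{S^i}, \qquad S^i=\sum_l w_l\sigma_l^i.
\end{align}
That is, DVAO is the centered combined reward divided by a weighted sum of the individual standard deviations rather than by $\sigma_\text{sum}^i$. Differentiating with respect to $r_k^{(i,j)}$:
\begin{itemize}
\item the numerator has derivative $w_k(1-\frac1G)$;
\item $S^i$ has derivative $w_k A_k^{(i,j)}/G$, by the fact above.
\end{itemize}
The quotient rule then gives
\begin{align}
\frac{w_k}{S^i}\left(1-\frac1G\right)-\frac{w_kA_k^{(i,j)}}{G\,S^i}\,A_\text{DVAO}^{(i,j)}.
\end{align}
Finally, rewrite $\frac{w_k}{S^i}=\frac{\tilde w_k}{\sigma_k^i}$, which is just the definition of $\tilde w_k$. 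This yields the stated expression, with the cross term $A_\text{DVAO}^{(i,j)}A_k^{(i,j)}$ in place of $\left(A_k^{(i,j)}\right)^2$.

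The main obstacle is bookkeeping rather than depth. First, one must not differentiate $\tilde w_k$ and $A_k^{(i,j)}$ separately, which produces a messy sum. Recognizing that the numerator collapses to the centered $r_\text{sum}$ avoids this. Second, the normalization convention must be fixed. With the unbiased $1/(G-1)$ standard deviation, the $\frac1G$ in front of the squared or cross term would change to $\frac{1}{G-1}$, so I would state the population-std convention explicitly at the start of the proof.
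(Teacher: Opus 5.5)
Your proposal is correct and follows the paper's proof essentially step for step. It uses the same two facts $\partial\mu_k^i/\partial r_k^{(i,j)}=\frac{1}{G}$ and $\partial\sigma_k^i/\partial r_k^{(i,j)}=A_k^{(i,j)}/G$, writes the DVAO advantage as $\sum_l w_l\bigl(r_l^{(i,j)}-\mu_l^i\bigr)$ over $S^i=\sum_l w_l\sigma_l^i$, applies the quotient rule, and finishes with $w_k/S^i=\tilde w_k/\sigma_k^i$. Stating the population-std convention and $\sigma_k^i>0$ explicitly is a useful addition the paper leaves implicit. One small correction: the convention determines the $\frac{1}{G}$ multiplying the squared or cross term, not the constant $1-\frac{1}{G}$, which comes from the mean under either convention — as your own closing remark correctly says.
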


This result highlights a fundamental difference in the optimization dynamics. In the advantage combination method, the gradient contribution from the $k$-th objective is scaled purely by its own isolated performance $\left(A_k^{(i,j)}\right)^2$, treating the auxiliary objectives as entirely separate tasks. In contrast, DVAO scales the gradient contribution using the cross-interaction term $A_\text{DVAO}^{(i,j)} A_k^{(i,j)}$. This mathematical property proves that DVAO dynamically adjusts the learning signal of the $k$-th objective based on the model's overall multi-objective performance $A_\text{DVAO}^{(i,j)}$ on that specific rollout. Consequently, DVAO automatically modulates the reward sensitivity to reinforce the synergistic alignment of multiple objectives, effectively functioning as a cross-objective, variance-aware regularization mechanism. Full proof is in Appendix~\ref{appendix:proof_of_proposition_3}.

In summary, our proposed DVAO method addresses the fundamental limitations of both standard reward combination and advantage combination methods in multi-reward GRPO. By dynamically adapting combination weights based on the empirical variance of each reward within a rollout group, DVAO achieves two critical theoretical properties. First, as demonstrated in Proposition \ref{proposition_2}, DVAO mitigates the training instability inherent in the raw reward combination method by yielding advantages with a strictly bounded magnitude, preventing overly aggressive policy updates. Second, and perhaps more importantly, Proposition \ref{proposition_3} proves that DVAO goes beyond the naive decoupling of the advantage combination method. By mathematically linking the gradient sensitivity of a single objective to the overall combined advantage $A_\text{DVAO}^{(i,j)} A_k^{(i,j)}$, DVAO introduces an implicit cross-objective regularization mechanism. The learning signal for any individual objective is dynamically modulated by the model's global multi-objective performance on that specific rollout. This context-aware scaling ensures that the policy does not greedily over-optimize a single easy objective at the expense of others, inherently promoting synergistic alignment and a more stable trajectory toward a multi-objective Pareto optimal policy.
 
\section{Experiments}

\subsection{Experimental Setup}
\textbf{Benchmarks.} In this work, we focus specifically on mathematical reasoning and tool-use tasks to evaluate our proposed DVAO algorithm. For mathematical reasoning task, we evaluate models on AIME-2024\footnote{\url{https://huggingface.co/datasets/Maxwell-Jia/AIME_2024}}, AIME-2025\footnote{\url{https://huggingface.co/datasets/yentinglin/aime_2025}}, MATH500 \citep{math500}, OlympiadBench \citep{olympiadbench}, and AMC23\footnote{\url{https://huggingface.co/datasets/AI-MO/aimo-validation-amc}}. In mathematical reasoning tasks, we focus on two main objectives: accuracy and length constrain. For tool-use task, we follow the setup of ToolRL \citep{toolrl} and GDPO, which evaluate models on Berkeley Function Call Leaderboard (BFCL-v4) \citep{bfcl}, a comperhensive benchmark covering a broad range of challanges, including single-step reasoning, multi-step tool-use, real-time execution, irrelevant tool rejection, simultaneous multi-tool selection, and multi-tool execution. In tool-use task, we focus on two main objectives: tool-use correctness and format compliance.

\textbf{Baselines and Models.} We mainly use GRPO \citep{grpo} as the single-reward $r_\text{acc}$ baseline for the comparison. Based on GRPO, we implement the Reward Combination (RC) method and Advantage Combination (AC) method for the multi-reward tasks. For comparison, we include the GDPO \citep{gdpo} algorithm. We use \textit{Qwen3-4B-Base} and \textit{Qwen3-8B-Base} \citep{qwen3} for the mathematical reasoning tasks, and \textit{Qwen2.5-3B-Instruct} and \textit{Qwen2.5-7B-Instruct} \citep{qwen2.5} for the tool-use tasks. For complete implementation details, see Appendix~\ref{appendix:implementation_details}.

\subsection{Main Results}
\label{subsection:main}

\begin{table}[t!]
    \centering
    \caption{Performance comparison across different methods on AIME-2024, AIME-2025, MATH500, OlympaidBench, and AMC23. \textbf{Acc.}: Output Accuracy (\%). \textbf{Len.}: The Rate (\%) of output length not exceeding $l$. DVAO achieves state-of-the-art performance with both scores in average.}
    \scalebox{0.845}{
    \begin{tabular}{l rr rr rr rr rr rr}
    \toprule
    \multirow{2}{*}{\textbf{Method}} & \multicolumn{2}{c}{\textbf{AIME-2024}} & \multicolumn{2}{c}{\textbf{AIME-2025}} & \multicolumn{2}{c}{\textbf{MATH500}} & \multicolumn{2}{c}{\textbf{Olympaid}} & \multicolumn{2}{c}{\textbf{AMC23}} & \multicolumn{2}{c}{\textbf{Average}} \\
    \cmidrule(lr){2-3} \cmidrule(lr){4-5} \cmidrule(lr){6-7} \cmidrule(lr){8-9} \cmidrule(lr){10-11} \cmidrule(lr){12-13}
    & \textbf{Acc.} & \textbf{Len.} & \textbf{Acc.} & \textbf{Len.} & \textbf{Acc.} & \textbf{Len.} & \textbf{Acc.} & \textbf{Len.} & \textbf{Acc.} & \textbf{Len.} & \textbf{Acc.} & \textbf{Len.} \\
    \midrule
    \rowcolor{lightgray}
    \multicolumn{13}{c}{\textbf{Qwen3-4B-Base}} \\
    \addlinespace[0.25em]
    Model & 7.92 & 88.12 & 4.79 & 88.54 & 55.05 & 91.42	& 26.98 & 92.02 & 34.18 & 91.94 & 25.78 & 90.41 \\
    ~+ GRPO & 17.91 & 62.08 & 10.20 & 68.33 & 78.92 &  93.04 & 42.88 & 82.82 & 49.62 & 82.91 & 39.91 & 77.84 \\
    \addlinespace[0.1em]
    \hline
    \addlinespace[0.25em]
    ~+ RC & 14.58 & 92.50 & 9.38 & 95.42 & 78.31 & 98.91 & 41.18 & 97.43 & 51.50 & 97.67 & 38.99 & 96.39 \\
    ~+ AC & 16.25 & 91.04 & 9.38 & 95.21 & 77.65 & 98.69 & 41.02 & 98.08 & 49.47 & 98.12 & 38.75 & 96.23 \\
    ~+ GDPO & 2.08 & 95.83 & 3.75 & 96.46 & 30.06 & 99.52 & 16.56 & 98.29 & 14.60 & 98.95 & 13.41 & 97.81 \\
    \addlinespace[0.1em]
    \rowcolor{lightblue}
    ~\textbf{+ DVAO} & \textbf{16.87} & \textbf{100.0} & \textbf{13.54} & \textbf{99.79} & \textbf{81.36} & \textbf{99.94} & \textbf{45.63} & \textbf{99.96} & \textbf{53.53} & \textbf{99.85} & \textbf{42.19} & \textbf{99.91} \\
    \midrule
    \rowcolor{lightgray}
    \multicolumn{13}{c}{\textbf{Qwen3-8B-Base}} \\
    \addlinespace[0.25em]
    Model & 11.87 & 89.79 & 8.96 & 93.33 & 69.20 & 97.44 & 34.95 & 94.66 & 41.56 & 96.16 & 33.31 & 94.28 \\
    ~+ GRPO & 29.58 & 40.00 & 24.58 & 52.08 & 87.93 & 89.61 & 55.57 & 70.81 & 65.21 & 64.83 & 52.57 & 63.47 \\
    \addlinespace[0.1em]
    \hline
    \addlinespace[0.25em]
    ~+ RC & 21.04 & 97.08 & 16.25 & 99.38 & 84.97 & 99.59 & 49.73 & 98.48 & 59.33 & 99.02 & 46.26 & 98.71 \\
    ~+ AC & 20.41 & 97.71 & 15.62 & 98.96 & 84.42 & 99.58 & 48.52 & 98.93 & 58.13 & 99.02 & 45.42 & 98.84 \\
    ~+ GDPO & 1.67 & 100.0 & 0.00 & 100.0 & 35.07 & \textbf{100.0} & 9.15 & \textbf{99.96} & 27.56 & \textbf{100.0} & 14.69 & \textbf{99.99} \\
    \addlinespace[0.1em]
    \rowcolor{lightblue}
    ~\textbf{+ DVAO} & \textbf{21.87} & \textbf{100.0} & \textbf{18.33} & \textbf{100.0} & \textbf{86.10} & 99.99 & \textbf{50.62} & 99.76 & \textbf{60.54} & 99.85 & \textbf{47.49} & 99.92 \\
    \bottomrule
    \end{tabular}
    }
    \label{tab:main_math}
\end{table}
\begin{table}[t!]
    \centering
    \caption{Performance comparison across different methods on Live, Non-Live, and Multi-Turn in BFCL-v4. \textbf{Acc.}: Output Accuracy (\%). \textbf{Format.}: The Rate (\%) of output content conforming to the required format. DVAO achieves state-of-the-art performance with both scores in average.}
    \scalebox{0.845}{
    \begin{tabular}{l rr rr rr rr}
    \toprule
    \multirow{2}{*}{\textbf{Method}} & \multicolumn{2}{c}{\textbf{Live}} & \multicolumn{2}{c}{\textbf{Non-Live}} & \multicolumn{2}{c}{\textbf{Multi-Turn}} & \multicolumn{2}{c}{\textbf{Average}} \\
    \cmidrule(lr){2-3} \cmidrule(lr){4-5} \cmidrule(lr){6-7} \cmidrule(lr){8-9}
    & \textbf{Acc.} & \textbf{Format.} & \textbf{Acc.} & \textbf{Format.} & \textbf{Acc.} & \textbf{Format.} & \textbf{Acc.} & \textbf{Format.} \\
    \midrule
    \rowcolor{lightgray}
    \multicolumn{9}{c}{\textbf{Qwen2.5-3B-Instruct}} \\
    \addlinespace[0.25em]
    Model & 59.44 & 11.02 & 47.10 & 4.89 & 1.75 & 1.08 & 36.10 & 5.66 \\
    ~+ GRPO & 59.73 & 10.35 & 47.94 & 5.18 & 2.00 & 1.29 & 36.56 & 5.61 \\
    \addlinespace[0.1em]
    \hline
    \addlinespace[0.25em]
    ~+ RC & 67.51 & 61.71 & 79.94 & 83.45 & 5.62 & 36.29 & 51.02 & 60.48 \\
    ~+ AC & 69.43 & 67.04 & 82.35 & 84.82 & 8.62 & 42.20 & 53.47 & 64.69 \\
    ~+ GDPO & 67.73 & 65.08 & 82.08 & 84.53 & 8.38 & 48.04 & 52.73 & 65.88 \\
    \addlinespace[0.1em]
    \rowcolor{lightblue}
    ~\textbf{+ DVAO} & \textbf{72.73} & \textbf{77.44} & \textbf{84.75} & \textbf{95.11} & \textbf{12.50} & \textbf{57.40} & \textbf{56.66} & \textbf{76.65} \\
    \midrule
    \rowcolor{lightgray}
    \multicolumn{9}{c}{\textbf{Qwen2.5-7B-Instruct}} \\
    \addlinespace[0.25em]
    Model & 62.92 & 0.0 & 69.56 & 0.0 & 11.00 & 0.0 & 47.83 & 0.00 \\
    ~+ GRPO & 68.76 & 0.0 & 81.65 & 0.0 & 6.38 & 0.0 & 52.26 & 0.00 \\
    \addlinespace[0.1em]
    \hline
    \addlinespace[0.25em]
    ~+ RC & 75.06 & \textbf{87.58} & 85.33 & \textbf{96.11} & 14.75 & 45.56 & 58.38 & 76.42 \\
    ~+ AC & 63.80 & 67.39 & 56.21 & 85.40 & 12.75 & 51.33 & 44.25 & 68.04 \\
    ~+ GDPO & 76.17 & 68.90 & 86.73 & 85.83 & 17.50 & 49.63 & 60.13 & 68.12 \\
    \addlinespace[0.1em]
    \rowcolor{lightblue}
    ~\textbf{+ DVAO} & \textbf{79.68} & 77.93 & \textbf{87.06} & 95.11 & \textbf{22.25} & \textbf{64.58} & \textbf{63.00} & \textbf{79.21} \\
    \bottomrule
    \end{tabular}
    }
    \label{tab:main_tool}
\end{table}

Tables~\ref{tab:main_math} and Table~\ref{tab:main_tool} summarize the performance across all methods and model scales. DVAO achieves the highest average accuracy and near-perfect length/format compliance simultaneously across both tasks and model scales, while every baseline method sacrifices one dimension for the other. On math reasoning, RC and AC trade accuracy for length compliance, and GDPO achieves near-perfect length compliance at the cost of the lowest accuracy among all methods. On tool-use, DVAO leads both accuracy and format compliance by a substantial margin. Notably, DVAO remains the only method to achieve the highest score on both dimensions simultaneously at both scales, whereas other methods improve one dimension at the expense of the other---GRPO shows near-zero format compliance on both tool-use models, and AC on 7B actually underperforms the base model in accuracy. Importantly, all methods share the same equal-weight initialization, so the consistent advantage of DVAO stems from its adaptive mechanism rather than superior initial hyperparameter choice, a conclusion reinforced by the Pareto frontier analysis in Section~\ref{subsection:pareto_frontiers} where DVAO dominates across the entire weight sweep.

\subsection{Training Dynamics}
\label{subsection:training_dynamics}

\begin{figure*}[t]
    \centering
    \begin{minipage}{0.31\textwidth}
        \centering
        \resizebox{\linewidth}{!}{\includegraphics{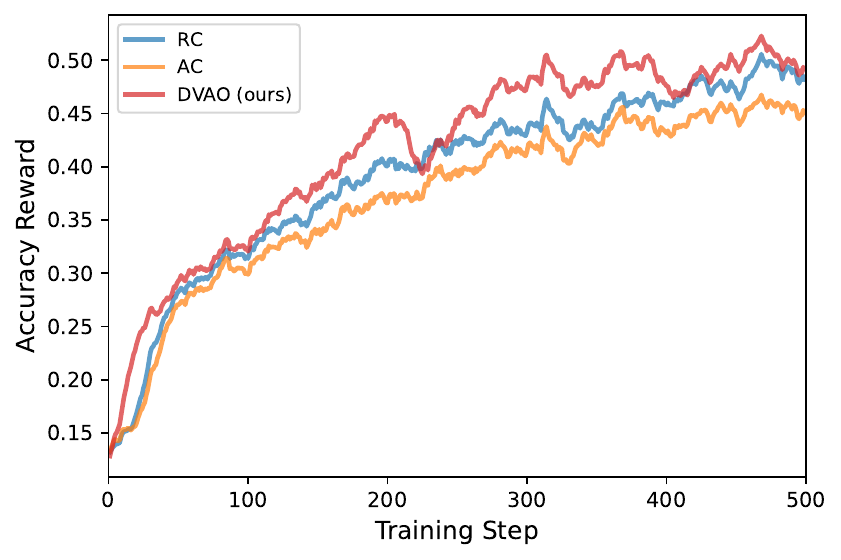}}
        \vspace{4pt}
        \resizebox{\linewidth}{!}{\includegraphics{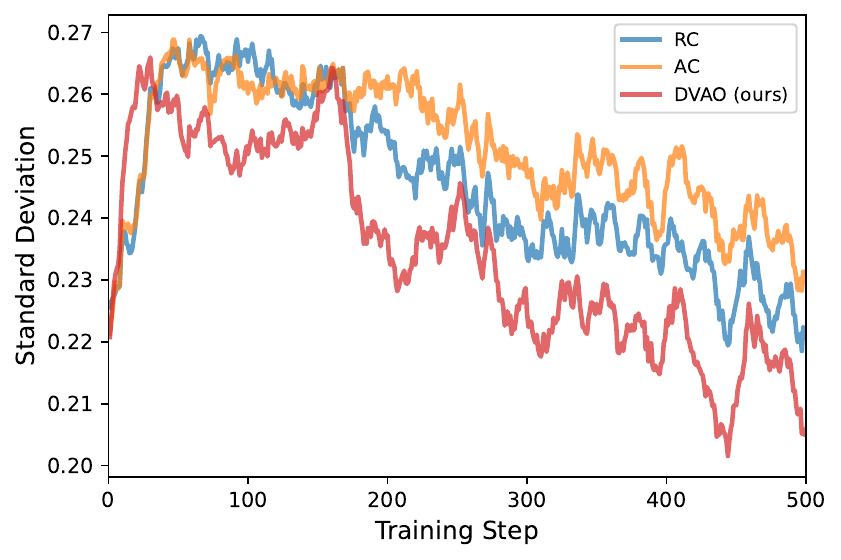}}
    \end{minipage}
    \hfill
    \begin{minipage}{0.31\textwidth}
        \centering
        \resizebox{\linewidth}{!}{\includegraphics{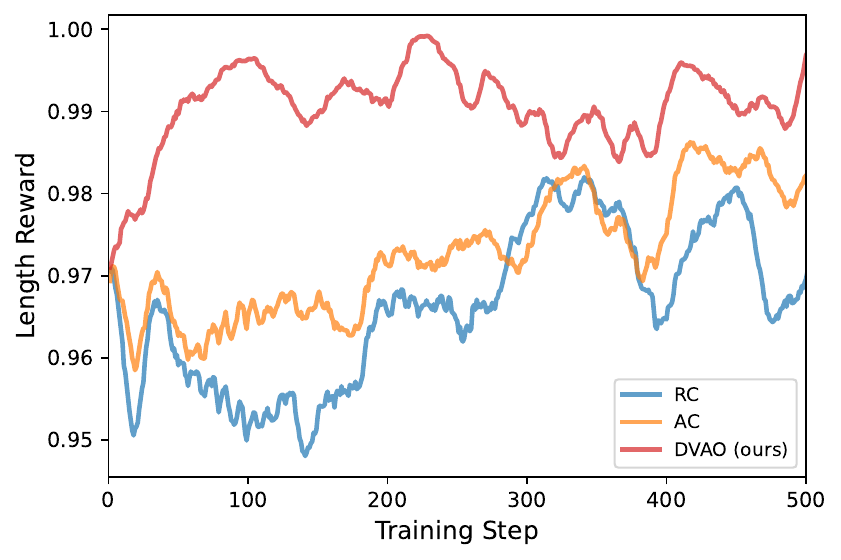}}
        \vspace{4pt}
        \resizebox{\linewidth}{!}{\includegraphics{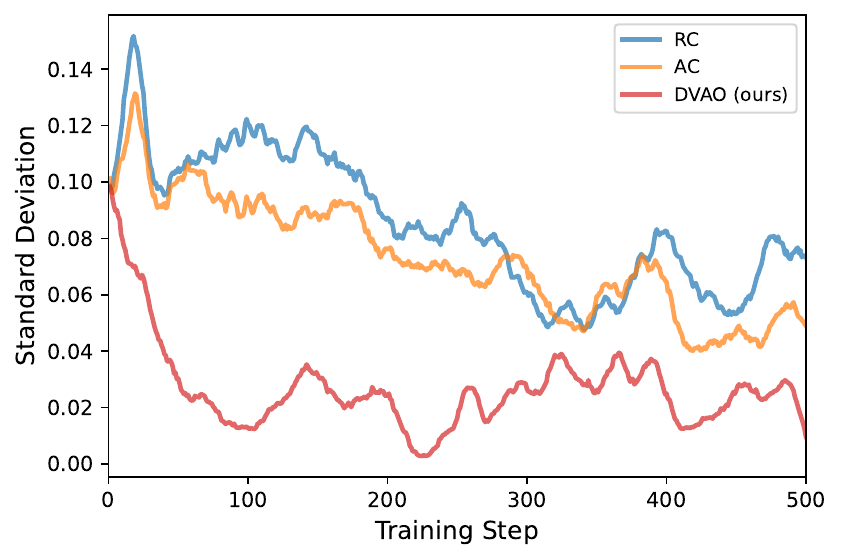}}
    \end{minipage}
    \hfill
    \begin{minipage}{0.31\textwidth}
        \centering
        \resizebox{\linewidth}{!}{\includegraphics{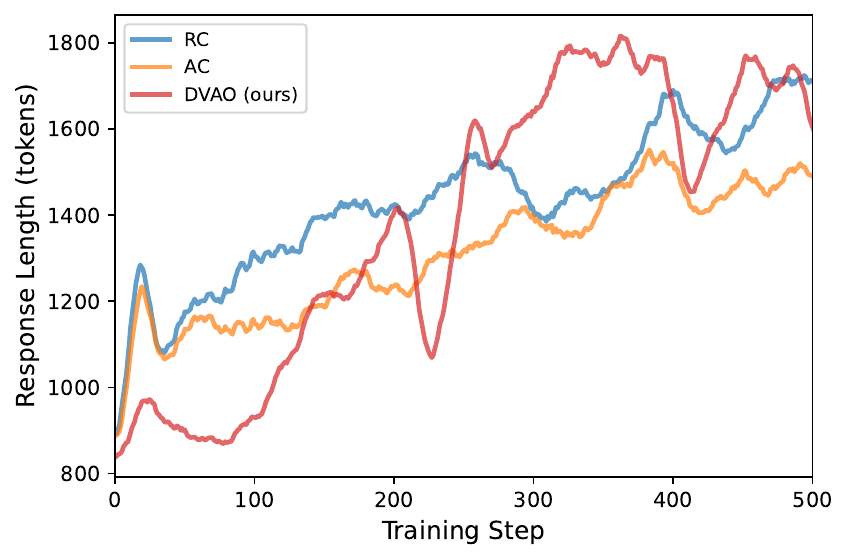}}
    \end{minipage}
    \caption{Training dynamics on \textit{Qwen3-4B-Base}. Left: accuracy reward (top=mean, bottom=std). Middle: length reward (top=mean, bottom=std). Right: average response length.}
    \label{fig:training-dynamics-4b}
\end{figure*}

\begin{figure*}[t]
    \centering
    \begin{minipage}{0.31\textwidth}
        \centering
        \resizebox{\linewidth}{!}{\includegraphics{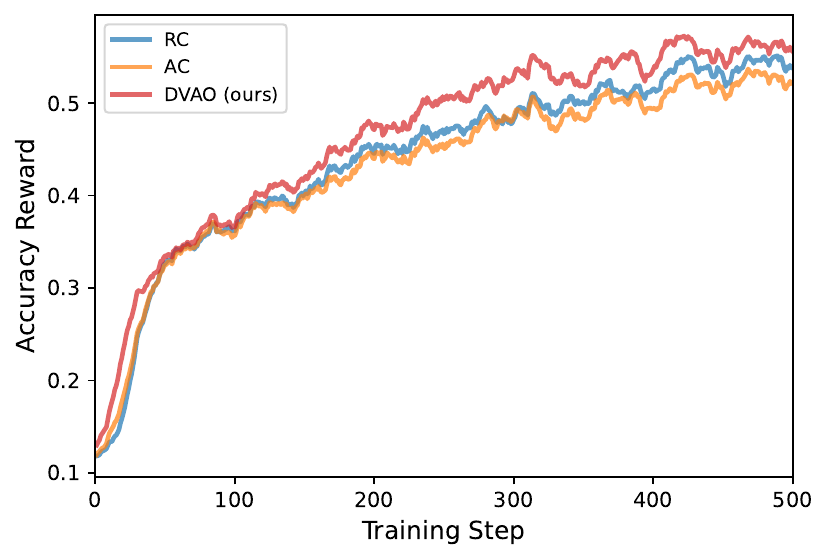}}
        \vspace{4pt}
        \resizebox{\linewidth}{!}{\includegraphics{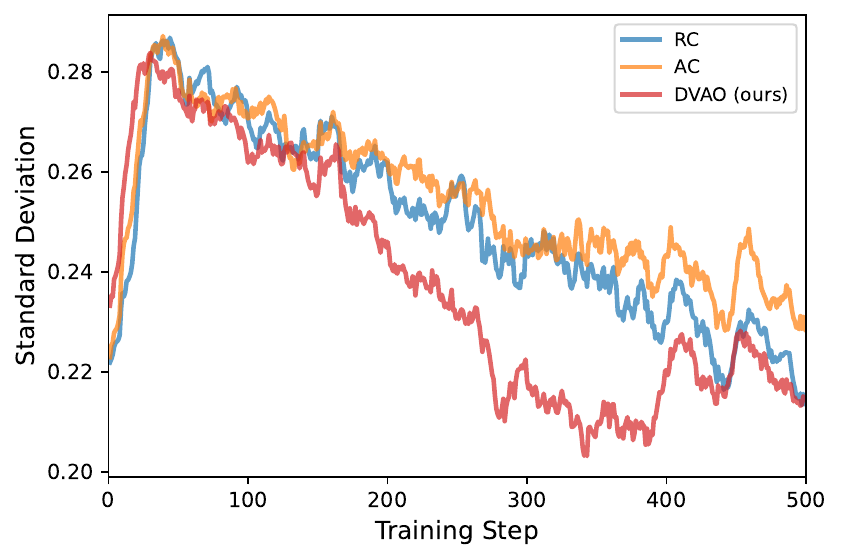}}
    \end{minipage}
    \hfill
    \begin{minipage}{0.31\textwidth}
        \centering
        \resizebox{\linewidth}{!}{\includegraphics{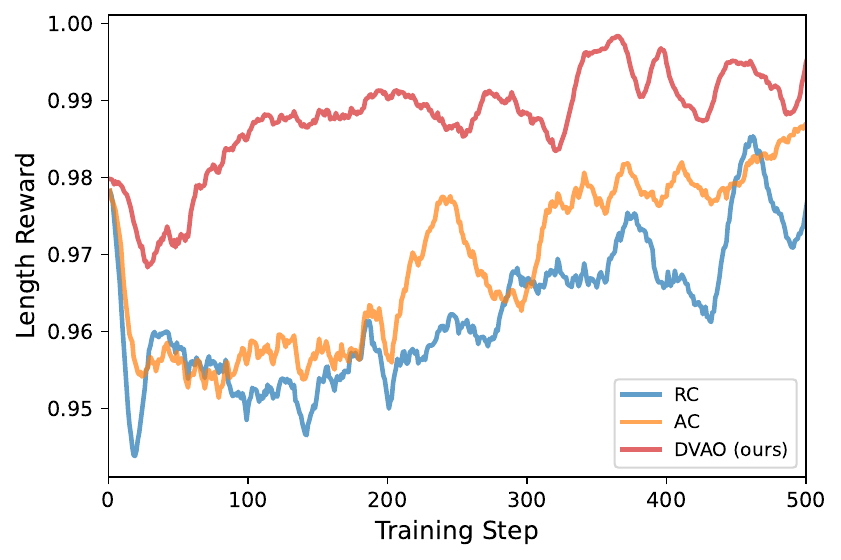}}
        \vspace{4pt}
        \resizebox{\linewidth}{!}{\includegraphics{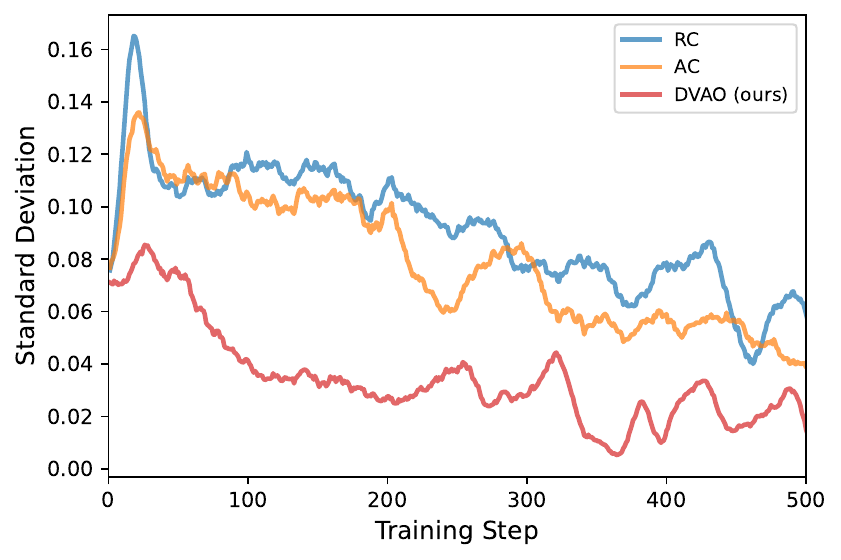}}
    \end{minipage}
    \hfill
    \begin{minipage}{0.31\textwidth}
        \centering
        \resizebox{\linewidth}{!}{\includegraphics{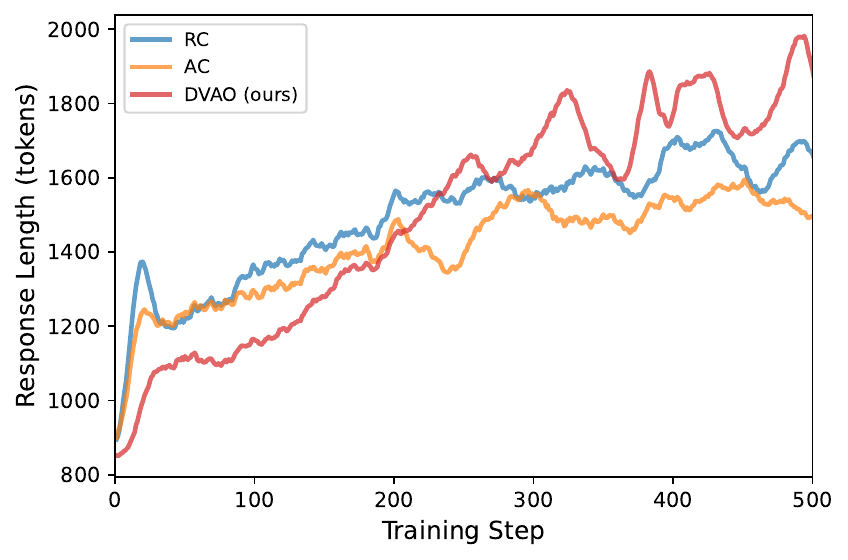}}
    \end{minipage}
    \caption{Training dynamics on \textit{Qwen3-8B-Base}. Left: accuracy reward (top=mean, bottom=std). Middle: length reward (top=mean, bottom=std). Right: average response length.}
    \label{fig:training-dynamics-8b}
\end{figure*}

To understand how DVAO shapes the optimization trajectory, we visualize the evolution of accuracy reward, length reward, and response length throughout training on both \textit{Qwen3-4B-Base} and \textit{Qwen3-8B-Base} (Figure~\ref{fig:training-dynamics-4b} and~\ref{fig:training-dynamics-8b}). All curves are smoothed with a centered moving average (window~$=$~15).

\noindent\textbf{Accuracy reward.} Across both model scales, DVAO consistently achieves the highest accuracy reward while suppressing its variance most effectively. All methods start from a similar low baseline and rise steadily throughout training. DVAO's accuracy reward curve stays above all baselines at every stage, with the margin widening on the larger model. More importantly, the standard deviation of accuracy rewards under DVAO declines more sharply than all baselines. On both 4B and 8B, DVAO's accuracy standard deviation drops to the lowest final value among all methods, while AC consistently exhibits the highest variance throughout training. This combination of higher mean accuracy and lower variance indicates that adaptive variance normalization yields both stronger task performance and more stable gradients, consistent with Proposition~\ref{proposition_2} which guarantees that DVAO's advantage magnitude remains bounded and well-scaled throughout training.

\noindent\textbf{Length reward.} DVAO drives the length reward closest to the target value of 1.0 and exhibits the most dramatic variance collapse. On both model scales, DVAO's length reward rises quickly and stabilizes near the target, while RC fluctuates more noticeably and settles at a visibly lower level. The length reward standard deviation under DVAO shows a far steeper decline than any baseline. For 4B, DVAO's standard deviation drops to a fraction of the RC and AC final values, which remain clustered together at significantly higher levels. For 8B, the gap is even more pronounced, with DVAO's standard deviation approaching near-zero while baselines retain substantially more variance. This variance-balancing mechanism prevents either reward channel from dominating the gradient, enabling more stable convergence to the target length reward. The pronounced std collapse under DVAO directly reflects the cross-objective regularization effect described in Proposition~\ref{proposition_3}, where the adaptive normalization couples the accuracy and length objectives to prevent either from overwhelming the combined advantage signal.

\noindent\textbf{Response length.} All methods start from a similar initial response length of around 800 tokens. DVAO drives the fastest and most sustained growth, reaching the highest final length on both model scales. RC achieves comparable final lengths, while AC exhibits the slowest growth. Notably, DVAO's response length curves on 4B and 8B display more visible oscillation compared to the smoother trajectories of RC and AC, which reflects its more aggressive length reward optimization and explores longer reasoning traces more dynamically. Despite this oscillation, the mean response length still converges to a higher plateau, confirming that DVAO's bounded advantage signal prevents runaway exploration while still encouraging productive length growth.

\subsection{Pareto Frontiers}
\label{subsection:pareto_frontiers}

\begin{figure}
    \centering
    \begin{subfigure}{0.48\textwidth}
        \centering
        \includegraphics[width=\linewidth]{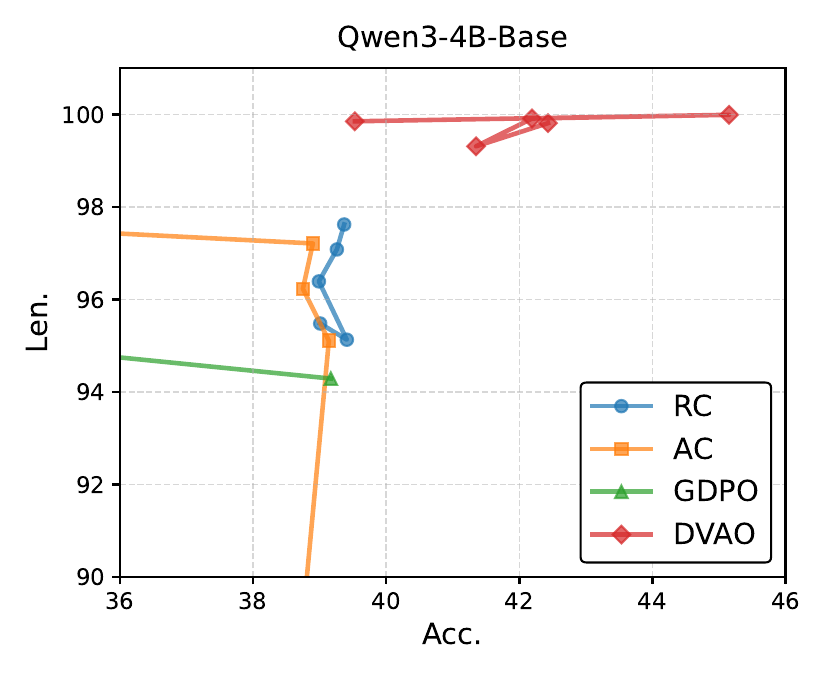}
        \caption{Mathematical Reasoning Task (\textit{Qwen3-4B-Base})}
        \label{fig:pareto-math}
    \end{subfigure}
    \hfill
    \begin{subfigure}{0.48\textwidth}
        \centering
        \includegraphics[width=\linewidth]{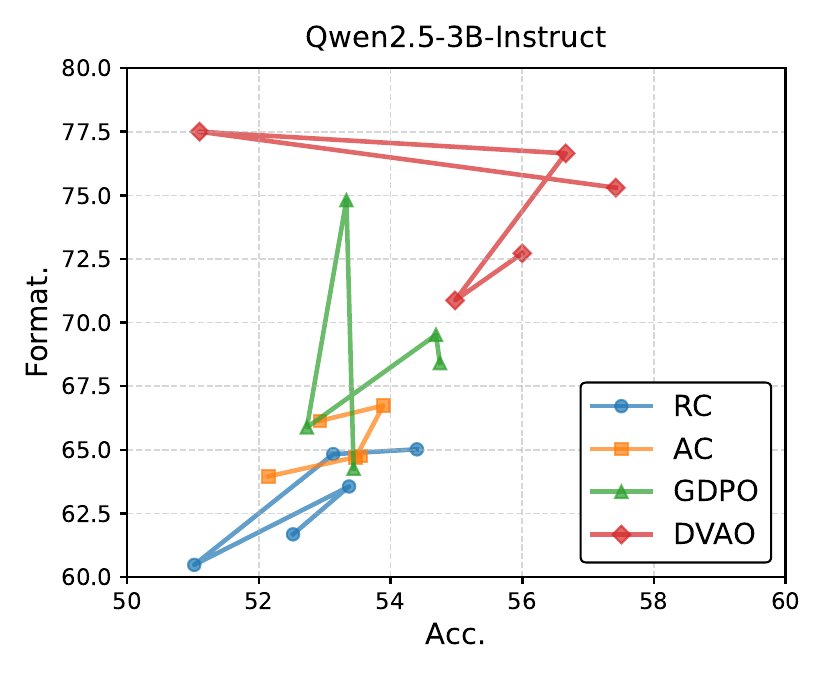}
        \caption{Tool-Use Task (\textit{Qwen2.5-3B-Instruct})}
        \label{fig:pareto-tool}
    \end{subfigure}
    \caption{Pareto frontier of accuracy vs.\ length/format compliance across methods. DVAO consistently dominates the upper-right region.}
    \label{fig:pareto}
\end{figure}

While Section~\ref{subsection:main} reports the trained single checkpoint per method, optimizing with a single weight $w_k$ only reveals one point on the accuracy--length trade-off curve. To fully characterize how each method balances correctness and conciseness, we sweep the accuracy weight $w_1$ (with length weight or format weight $w_2 = 1 - w_1$) across a range of values $\{0.1, 0.3, 0.5, 0.7, 0.9\}$ and plot the resulting Pareto frontier in Figure~\ref{fig:pareto-math} (\textit{Qwen3-4B-Base}, Mathematical Reasoning Task) and Figure~\ref{fig:pareto-tool} (\textit{Qwen2.5-3B-Instruct}, Tool-Use Task).

DVAO consistently achieves superior trade-offs, dominating the frontier across both tasks by maintaining high auxiliary compliance (length/format) across the entire accuracy range. In contrast, fixed-weight baselines exhibit distinct failure modes: RC saturates quickly, AC suffers from severe instability, and GDPO fluctuates incoherently. This confirms that without adaptive normalization, uncontrolled advantage scaling prevents effective trade-off navigation. Furthermore, DVAO's dominance is especially pronounced on the complex math task, shifting the entire frontier significantly above baselines. This suggests that dynamic variance-balancing is crucial for multi-step reasoning, as it prevents easier objectives (length) from overwhelming harder learning signals (accuracy) during training.

\section{Related Work}

\textbf{Advancements in GRPO and Reasoning Models}\quad The shift from PPO \citep{ppo} to GRPO \citep{grpo} has significantly streamlined the post-training pipeline for Large Language Models by eliminating the need for a heavily parameterized value model. This efficiency has been instrumental in the development of state-of-the-art reasoning models like DeepSeek-R1 \citep{deepseek-r1}. Recently, several variants have been proposed to further enhance GRPO’s stability and efficiency. GSPO \citep{gspo} shifts the importance ratio calculation from the token level to the sequence level to mitigate variance. DAPO \citep{dapo} introduces dynamic sampling and token-level policy gradients to accelerate convergence. To address the issue of length explosion during reasoning, methods like GFPO \citep{gfpo} and DLER \citep{dler} incorporate length-based heuristics, such as filtering responses by reward-per-token ratios or applying simple truncation penalties. While these extensions improve specific aspects of the GRPO framework, they primarily focus on single-reward maximization or rely on rigid heuristics, which struggle to generalize to complex, multi-dimensional alignment tasks.

\textbf{Multi-Reward Reinforcement Learning in LLMs}\quad Integrating multiple, often conflicting, reward signals is increasingly vital for practical LLM deployments, ranging from balancing diverse human preferences \citep{alarm, jang2023personalized} to enforcing length efficiency \citep{gfpo, dler, luo2025o1} and strict formatting constraints in agentic tool-use \citep{toolrl, xlam}. To simultaneously optimize these diverse objectives, standard practices typically rely on Reward Combination (RC) or Advantage Combination (AC). While RC directly scalarizes raw rewards but frequently suffers from magnitude explosion, AC-based methods like GDPO \citep{gdpo} independently normalize each reward into an advantage before applying a static convex combination. Although GDPO mitigates extreme gradients, its reliance on fixed hyperparameters completely isolates the objectives during normalization. Our proposed DVAO diverges fundamentally from these approaches: by introducing a dynamic, variance-adaptive weighting scheme, DVAO strictly bounds advantage magnitudes while explicitly modeling cross-objective correlations, enabling a self-adaptive regularization mechanism that seamlessly scales to multiple objectives without manual tuning.
\section{Conclusion}

In this work, we identify the fundamental theoretical and practical limitations of standard scalarization techniques—namely Reward Combination and Advantage Combination—for multi-reward GRPO. To address the issues of magnitude explosion and objective isolation, we introduce Dynamic Variance-adaptive Advantage Optimization. By dynamically adjusting combination weights based on the empirical variance of each objective within a rollout group, DVAO explicitly up-weights learning signals from high-variance objectives while suppressing low-variance noise. Empirical evaluations across comprehensive mathematical reasoning and tool-use benchmarks confirm that DVAO achieves a superior Pareto optimal policy, seamlessly balancing accuracy with length and format constraints without relying on fixed hyperparameters. Future work will explore scaling the DVAO framework to environments with a larger number of conflicting reward functions and extending the variance-adaptive mechanism to broader alignment paradigms.

\bibliography{neurips_2026}
\bibliographystyle{plainnat}

\appendix
\BeforeBeginEnvironment{proposition}{\addtocounter{proposition}{-3}}

\section{Proof of Proposition 1}
\label{appendix:proof_of_proposition_1}

\begin{proposition}
    For a fixed query $x_i$, let $\hat{\rho}_{kl}^i$ denote the sample correlation between $A_k$ and $A_l$ within the group rollout. The reward combination method and the advantage combination method satisfy:
    \begin{align}
        \frac{1}{G} \sum_{j=1}^G \left(A_\text{sum}^{(i,j)}\right)^2 \ge \frac{1}{G} \sum_{j=1}^G \left(A^{(i,j)}\right)^2 = \frac{1}{G} \sum_{j=1}^G \left(\sum_k w_k A_k^{(i,j)}\right)^2 \notag
    \end{align}
    with equality if and only if $\hat{\rho}_{kl}=1$ for all $k \neq l$.
\end{proposition}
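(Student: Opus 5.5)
The plan is a direct computation: the left-hand side is a constant fixed by normalization, and the right-hand side is a quadratic form in the weights whose entries are sample correlations. Throughout, I assume that for the fixed query $x_i$ every group standard deviation is nonzero, so that $\sigma_\text{sum}^i>0$ and $\sigma_k^i>0$ for all $k$; otherwise the advantages in Equation~\ref{reward_combination} and Equation~\ref{advantage_combination} are undefined. I take $\text{std}$ to be the population standard deviation, with normalizer $1/G$. If the $1/(G-1)$ convention is used instead, both sides are multiplied by the same factor $(G-1)/G$, and the inequality and its equality case are unchanged.

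\textbf{Step 1 (left-hand side).} By construction, $\{A_\text{sum}^{(i,j)}\}_{j=1}^G$ is the standardization of $\{r_\text{sum}^{(i,j)}\}_j$. It therefore has group mean $0$ and group variance $1$, which gives $\frac{1}{G}\sum_j (A_\text{sum}^{(i,j)})^2 = 1$.

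\textbf{Step 2 (right-hand side).} The same argument gives $\frac1G\sum_j A_k^{(i,j)}=0$ and $\frac1G\sum_j (A_k^{(i,j)})^2=1$ for each $k$. Hence the sample correlation is simply $\hat\rho_{kl}^i=\frac1G\sum_j A_k^{(i,j)}A_l^{(i,j)}$. Expanding the square and exchanging the sums gives
\begin{align}
\frac1G\sum_{j=1}^G\Big(\sum_k w_kA_k^{(i,j)}\Big)^2=\sum_k w_k^2+\sum_{k\neq l} w_kw_l\hat\rho_{kl}^i. \notag
\end{align}

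\textbf{Step 3 (comparison).} By Cauchy--Schwarz, $\hat\rho_{kl}^i\le 1$, and $w_kw_l\ge0$. So the right-hand side is at most $\sum_k w_k^2+\sum_{k\ne l}w_kw_l=(\sum_k w_k)^2=1$, which is the left-hand side from Step 1. This proves the inequality.

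\textbf{Step 4 (equality case).} The two sides differ by exactly $\sum_{k\neq l} w_kw_l(1-\hat\rho_{kl}^i)$. This is a sum of nonnegative terms, so equality holds if and only if $\hat\rho_{kl}^i=1$ for every pair $k\neq l$ with $w_kw_l>0$. To match the statement exactly, "for all $k\neq l$", I will add the hypothesis that all $w_k>0$, or equivalently discard objectives with zero weight. Without that hypothesis, the "only if" direction fails for pairs involving a zero weight.

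\textbf{Remark (stronger form).} If all $\hat\rho_{kl}^i=1$, the Cauchy--Schwarz equality case gives $A_k^{(i,j)}=A_l^{(i,j)}$ for all $j$. Then $A^{(i,j)}=A_k^{(i,j)}$ coincides pointwise with $A_\text{sum}^{(i,j)}$, because $r_\text{sum}$ is an affine function of a single standardized profile.

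The only real obstacle is bookkeeping rather than analysis. The crucial observation is that $A_\text{sum}$ has mean square exactly $1$ regardless of the weights. After that, the degenerate cases must be stated carefully: zero standard deviations and zero weights.
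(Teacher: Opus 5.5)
Your proof is correct and follows the paper's argument: both use that each standardized advantage has unit mean square, expand the mean square of $\sum_k w_k A_k^{(i,j)}$ into $\sum_k w_k^2 + 2\sum_{k<l} w_k w_l \hat{\rho}_{kl}^i = 1 - 2\sum_{k<l} w_k w_l (1-\hat{\rho}_{kl}^i)$, and bound this by $1$. Your extra care is justified, since the paper's proof passes over each of these points: you require nonzero standard deviations, including $\sigma_\text{sum}^i>0$ (which can fail even when every $\sigma_k^i>0$), you note that zero weights break the ``only if'' direction, and you state the equality case explicitly.
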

\begin{proof}
    For a fixed query $x_i$, since $A_k^{(i,j)}$ is computed by normalizing $r_k^{(i,j)}$ within the rollout group, we have by definition:
    \begin{align}
        \frac{1}{G}\sum_{j=1}^G A_k^{(i,j)} = 0, \frac{1}{G}\sum_{j=1}^G \left( A_k^{(i,j)} \right)^2 = 1.\notag
    \end{align}
    For the reward combination method, similarly by definition of normalization:
    \begin{align}
        \frac{1}{G} \sum_{j=1}^G \left( A_\text{sum}^{(i,j)} \right)^2 = 1.\notag
    \end{align}
    For the advantage combination:
    \begin{align}
        \frac{1}{G} \sum_{j=1}^G \left(A^{(i,j)}\right)^2 &= \frac{1}{G} \sum_{j=1}^G \left(\sum_k w_k A_k^{(i,j)}\right)^2 \notag\\
        &= \sum_k w_k^2 \cdot \frac{1}{G} \sum_{j=1}^G \left( A_k^{(i,j)} \right)^2 + 2 \sum_{k < l} w_k w_l \cdot \frac{1}{G} \sum_{j=1}^G A_k^{(i,j)}A_l^{(i,j)} \notag\\
        &= \sum_k w_k^2 + 2 \sum_{k<l} w_k w_l \hat{\rho}_{kl}^i \notag\\
        &= \left(\sum_k w_k\right)^2 - 2\sum_{k<l} w_k w_l \left(1 - \hat{\rho}_{kl}^i\right) \notag\\
        &= 1 - 2\sum_{k<l} w_k w_l \left(1 - \hat{\rho}_{kl}^i\right) \le 1, \notag
    \end{align}
    which completes the proof.
\end{proof}

\section{Proof of Proposition 2}
\label{appendix:proof_of_proposition_2}

\BeforeBeginEnvironment{proposition}{\addtocounter{proposition}{3}}

\begin{proposition}
    For a fixed query $x_i$ and rollout group $\{y_j\}_{j=1}^G \sim \pi_\theta(\cdot | x_i)$, the reward combination method produces a pointwise larger advantage magnitude than DVAO:
    \begin{align}
        \left| A_\text{DVAO}^{(i,j)} \right| \le \left| A_\text{sum}^{(i,j)} \right|, \forall j \in \{1, 2, \cdots, G\} \notag
    \end{align}
    with equality if and only if $\Cov\left(r_k^{(i,j)}, r_l^{(i,j)}\right) = \sigma_k^i \sigma_l^i$ for all $k \neq l$, i.e., all reward pairs are perfectly positively correlated within the rollout group.
\end{proposition}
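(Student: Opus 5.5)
The plan rests on one observation: $A_\text{DVAO}^{(i,j)}$ is a positive rescaling of $A_\text{sum}^{(i,j)}$. The whole statement then reduces to a triangle inequality for the group standard deviation. Throughout I fix the query $x_i$ and assume the relevant group standard deviations are nonzero, so that every advantage is well defined (if some $\sigma_k^i=0$, that term is simply dropped).

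\textbf{Step 1 (collapse DVAO to a centered sum).} By the definition of $A_k^{(i,j)}$ in Equation~\ref{advantage_combination}, $\sigma_k^i A_k^{(i,j)} = r_k^{(i,j)} - \text{mean}\left(\left\{r_k^{(i,j)}\right\}_{j=1}^G\right)$. Substituting this into Equation~\ref{DVAO} and using linearity of the mean gives
\begin{align}
A_\text{DVAO}^{(i,j)} = \frac{r_\text{sum}^{(i,j)} - \text{mean}\left(\left\{r_\text{sum}^{(i,j)}\right\}_{j=1}^G\right)}{\sum_l w_l \sigma_l^i} = \frac{\sigma_\text{sum}^i}{\sum_l w_l\sigma_l^i}\, A_\text{sum}^{(i,j)}. \notag
\end{align}
The factor multiplying $A_\text{sum}^{(i,j)}$ does not depend on $j$. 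Hence the pointwise claim is equivalent to the single scalar inequality $\sigma_\text{sum}^i \le \sum_l w_l \sigma_l^i$.

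\textbf{Step 2 (subadditivity of the standard deviation).} I will expand the group variance bilinearly as $(\sigma_\text{sum}^i)^2 = \sum_{k,l} w_k w_l \Cov\left(r_k^{(i,j)}, r_l^{(i,j)}\right)$, with the empirical covariance taken over the group. By Cauchy--Schwarz, each covariance term is at most $\sigma_k^i\sigma_l^i$, and the weights are nonnegative. Therefore
\begin{align}
(\sigma_\text{sum}^i)^2 \le \sum_{k,l} w_k w_l \sigma_k^i\sigma_l^i = \left(\sum_l w_l\sigma_l^i\right)^2. \notag
\end{align}
Taking square roots gives the required bound. This is just Minkowski's inequality for the empirical $L^2$ norm of the centered rewards.

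\textbf{Step 3 (equality case).} The rescaling factor equals $1$ exactly when every cross term attains its Cauchy--Schwarz bound, i.e.\ $\Cov\left(r_k^{(i,j)}, r_l^{(i,j)}\right)=\sigma_k^i\sigma_l^i$ for all $k\neq l$. This is perfect positive correlation, which mirrors the condition in the earlier Proposition on reward versus advantage combination.

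The main obstacle is stating the equality condition precisely rather than proving the inequality. Two caveats have to be recorded. First, the condition is only needed for pairs with $w_kw_l>0$. Second, pointwise equality $|A_\text{DVAO}^{(i,j)}|=|A_\text{sum}^{(i,j)}|$ holds trivially at any $j$ with $A_\text{sum}^{(i,j)}=0$. So the ``only if'' direction should be read as equality holding for all $j$ in a nondegenerate group, which is equivalent to the scalar factor being $1$.
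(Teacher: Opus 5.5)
Your proof is correct and is essentially the paper's argument: the paper likewise uses the identity $\sigma_\text{sum}^i A_\text{sum}^{(i,j)} = \sum_k w_k \sigma_k^i A_k^{(i,j)}$ (your $j$-independent rescaling), then bounds $\sigma_\text{sum}^i \le \sum_k w_k \sigma_k^i$ by expanding the variance and applying Cauchy--Schwarz to each covariance. Your treatment of the equality case is more careful than the paper's, whose proof never addresses the ``if and only if'': you note that only pairs with $w_k w_l > 0$ matter and that equality is trivial wherever $A_\text{sum}^{(i,j)} = 0$.
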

\begin{proof}
    For a fixed query $x_i$, and rollout group $\{y_j\}_{j=1}^G \sim \pi_\theta (\cdot | x_i)$, based on the definitions of the reward combination method, we first establish the following key identity:
    \begin{align}
        \sigma_\text{sum}^i A_\text{sum}^{(i,j)} = \sum_k w_k \sigma_k^i A_k^{(i,j)},\notag
    \end{align}
    which follows from:
    \begin{align}
        \sigma_\text{sum}^i A_\text{sum}^{(i,j)} = r_\text{sum}^{(i,j)} - \frac{1}{G} \sum_{j'}r_\text{sum}^{(i,j')} = \sum_k w_k \left( r_k^{(i,j)} - \frac{1}{G}\sum_{j'}r_k^{(i,j')} \right) = \sum_k w_k \sigma_k^i A_k^{(i,j)}.\notag
    \end{align}
    
    We next show that $\sigma_\text{sum}^i \le \sum_k w_k \sigma_k^i$. By expanding the variance of the combined reward:
    \begin{align}
        \left(\sigma_\text{sum}^i\right)^2 = \Var\left( \sum_k w_k r_k^{(i,j)} \right) = \sum_k w_k^2 \left(\sigma_k^i\right)^2 + 2 \sum_{k < l} w_k w_l \Cov\left(r_k^{(i,j)}, r_l^{(i,j)}\right).\notag
    \end{align}
    By the Cauchy-Schwarz inequality, $\Cov\left(r_k^{(i,j)}, r_l^{(i,j)}\right) \le \sigma_k^i \sigma_l^i$, and therefore:
    \begin{align}
        \left(\sigma_\text{sum}^i\right)^2 \le \sum_k w_k^2 \left(\sigma_k^i\right)^2 + 2\sum_{k < l} w_k w_l \sigma_k^i \sigma_l^i = \left(\sum_k w_k \sigma_k^i\right)^2.\notag
    \end{align}
    Taking the square root on both sides yields $\sigma_\text{sum}^i \le \sum_k w_k \sigma_k^i$. Finally, taking the absolute value of the above identity and dividing both sides by $\sigma_\text{sum}^i$:
    \begin{align}
        \left|A_\text{sum}^{(i,j)}\right| = \frac{\left|\sum_k w_k \sigma_k^i A_k^{(i,j)}\right|}{\sigma_\text{sum}^i} \ge \frac{\left|\sum_k w_k \sigma_k^i A_k^{(i,j)}\right|}{\sum_kw_k \sigma_k^i} = \left|\sum_k \tilde{w}_k A_k^{(i,j)}\right| = \left|A_\text{DVAO}^{(i,j)}\right|,\notag
    \end{align}
    where the inequality follows from $\sigma_\text{sum}^i \le \sum_k w_k \sigma_k^i$, and the last equality follows from the definition of $A_\text{DVAO}^{(i,j)}$.
\end{proof}

\section{Proof of Proposition 3}
\label{appendix:proof_of_proposition_3}

\begin{proposition}
    For a fixed query $x_i$, and rollout group $\{y_j\}_{j=1}^G \sim \pi_\theta(\cdot | x_i)$, the sensitivity of the combined advantage with  respect to the $k$-th raw reward $r_k^{(i,j)}$ for the advantage combination method and DVAO are respectively given by:
    \begin{align}
        \frac{\partial A^{(i,j)}}{\partial r_k^{(i,j)}} &= \frac{w_k}{\sigma_k^i} \left( 1 - \frac{1}{G} - \frac{1}{G} \left(A_k^{(i,j)}\right)^2 \right), \notag \\
        \frac{\partial A_\text{DVAO}^{(i,j)}}{\partial r_k^{(i,j)}} &= \frac{\tilde{w}_k}{\sigma_k^i} \left( 1 - \frac{1}{G} - \frac{1}{G} A_\text{DVAO}^{(i,j)} A_k^{(i,j)} \right). \notag
    \end{align}
    While the sensitivity of $A^{(i,j)}$ strictly depends on the isolated advantage of the $k$-th objective, the sensitivity of $A_\text{DVAO}^{(i,j)}$ adaptively depends on the cross-term $A_\text{DVAO}^{(i,j)} A_k^{(i,j)}$, allowing it to aggregate global performance information across all objectives within the rollout group.
\end{proposition}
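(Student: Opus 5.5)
The plan is a direct computation with the quotient rule. We treat all rewards $\{r_l^{(i,j')}\}$ as independent variables and differentiate only with respect to the single entry $r_k^{(i,j)}$, holding every other reward fixed. We assume $\sigma_k^i>0$ so that every quantity is differentiable. We use the population normalization implicit in the proof of Proposition 1, namely $(\sigma_k^i)^2=\frac{1}{G}\sum_{j'}(r_k^{(i,j')}-\mu_k^i)^2$ with $\mu_k^i=\frac{1}{G}\sum_{j'}r_k^{(i,j')}$.

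\textbf{Step 1: basic derivatives.} First, $\partial \mu_k^i/\partial r_k^{(i,j)}=1/G$. Differentiating $(\sigma_k^i)^2$, the contribution through $\mu_k^i$ vanishes because $\sum_{j'}(r_k^{(i,j')}-\mu_k^i)=0$. This gives $\partial(\sigma_k^i)^2/\partial r_k^{(i,j)}=\frac{2}{G}(r_k^{(i,j)}-\mu_k^i)$, and hence
\begin{align}
\frac{\partial \sigma_k^i}{\partial r_k^{(i,j)}}=\frac{1}{G}A_k^{(i,j)}.\notag
\end{align}
Applying the quotient rule to $A_k^{(i,j)}=(r_k^{(i,j)}-\mu_k^i)/\sigma_k^i$ then yields
\begin{align}
\frac{\partial A_k^{(i,j)}}{\partial r_k^{(i,j)}}=\frac{1}{\sigma_k^i}\left(1-\frac{1}{G}-\frac{1}{G}\left(A_k^{(i,j)}\right)^2\right).\notag
\end{align}
For $l\neq k$, the quantities $A_l^{(i,j)}$ and $\sigma_l^i$ do not depend on $r_k^{(i,j)}$.

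\textbf{Step 2: advantage combination.} Since $A^{(i,j)}=\sum_l w_lA_l^{(i,j)}$ with constant weights, only the $l=k$ term survives. Multiplying the Step 1 formula by $w_k$ gives the first claimed identity.

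\textbf{Step 3: DVAO.} Write $A_\text{DVAO}^{(i,j)}=N/D$. Here $N=\sum_l w_l\sigma_l^iA_l^{(i,j)}=\sum_l w_l(r_l^{(i,j)}-\mu_l^i)$; this rewriting, the same one used in the proof of Proposition~\ref{proposition_2}, removes $\sigma$ from the numerator. The denominator is $D=\sum_l w_l\sigma_l^i$. From Step 1 we get
\begin{align}
\frac{\partial N}{\partial r_k^{(i,j)}}=w_k\left(1-\frac{1}{G}\right),\qquad \frac{\partial D}{\partial r_k^{(i,j)}}=\frac{w_k}{G}A_k^{(i,j)}.\notag
\end{align}
The quotient rule then gives
\begin{align}
\frac{\partial A_\text{DVAO}^{(i,j)}}{\partial r_k^{(i,j)}}=\frac{w_k}{D}\left(1-\frac{1}{G}-\frac{1}{G}\frac{N}{D}A_k^{(i,j)}\right).\notag
\end{align}
Finally, substitute $N/D=A_\text{DVAO}^{(i,j)}$ and $w_k/D=\tilde w_k/\sigma_k^i$, which holds by the definition of $\tilde w_k$. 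This is the second identity.

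The main point needing care is Step 1, the derivative of $\sigma_k^i$: one must check that the dependence through the mean cancels and that the result collapses to the clean form $A_k^{(i,j)}/G$. Everything after that is bookkeeping. One modelling choice should also be stated explicitly: the $\tilde w_k$ are differentiated as functions of the rewards rather than treated as stop-gradient constants, and this is exactly what produces the cross term $A_\text{DVAO}^{(i,j)}A_k^{(i,j)}$.
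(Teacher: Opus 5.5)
Your proposal is correct and follows the paper's proof essentially step for step. It uses the same basic derivatives $\partial\mu_k^i/\partial r_k^{(i,j)}=1/G$ and $\partial\sigma_k^i/\partial r_k^{(i,j)}=A_k^{(i,j)}/G$, the chain rule for the fixed-weight combination, and the quotient rule on $N/S^i$ with $N=\sum_l w_l(r_l^{(i,j)}-\mu_l^i)$ and $S^i=\sum_l w_l\sigma_l^i$, followed by the substitution $w_k/S^i=\tilde w_k/\sigma_k^i$. You also state three conditions the paper leaves implicit, namely the population-normalization convention, $\sigma_k^i>0$, and differentiating $\tilde w_k$ rather than treating it as stop-gradient, which are worth keeping.
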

\begin{proof}
    For a fixed query $x_i$, the standard group normalization advantage for the $k$-th objective is $A_k^{(i,j)} = \frac{r_k^{(i,j)} - \mu_k^i}{\sigma_k^i}$, where $\mu_k^i = \frac{1}{G} \sum_{j=1}^G r_k^{(i,j)}$. Using the standard properties of the sample mean and standard deviation, the partial derivatives are $\frac{\partial \mu_k^i}{\partial r_k^{(i,j)}} = \frac{1}{G}$ and $\frac{\partial \sigma_k^i}{\partial r_k^{(i,j)}} = \frac{A_k^{(i,j)}}{G}$. Consequently, the derivative of the individual advantage is:
    \begin{align}
        \frac{\partial A_k^{(i,j)}}{\partial r_k^{(i,j)}} = \frac{1}{\sigma_k^i} \left( 1 - \frac{1}{G} - \frac{1}{G} \left(A_k^{(i,j)}\right)^2 \right).\notag
    \end{align}
    For the advantage combination method, since $A^{(i,j)} = \sum_l w_l A_l^{(i,j)}$ and the rewards across different objectives are treated independently in their respective normalizations, applying the chain rule directly yields:
    \begin{align}
        \frac{\partial A^{(i,j)}}{\partial r_k^{(i,j)}} = w_k \frac{\partial A_k^{(i,j)}}{\partial r_k^{(i,j)}} = \frac{w_k}{\sigma_k^i} \left( 1 - \frac{1}{G} - \frac{1}{G} \left(A_k^{(i,j)}\right)^2 \right).\notag
    \end{align}
    For DVAO, we can rewrite the advantage as $A_\text{DVAO}^{(i,j)} = \frac{\sum_l w_l (r_l^{(i,j)} - \mu_l^i)}{S^i}$, where the denominator $S^i = \sum_k w_k \sigma_k^i$. By applying the quotient rule with respect to $r_k^{(i,j)}$, we obtain:
    \begin{align}
        \frac{\partial A_\text{DVAO}^{(i,j)}}{\partial r_k^{(i,j)}} &= \frac{w_k \left( 1 - \frac{1}{G} \right) S^i - \left[ \sum_l w_l (r_l^{(i,j)} - \mu_l^i) \right] w_k \frac{\partial \sigma_k^i}{\partial r_k^{(i,j)}}}{\left(S^i\right)^2} \notag\\
        &= \frac{w_k \left( 1 - \frac{1}{G} \right) S^i - \left( S^i A_\text{DVAO}^{(i,j)} \right) w_k \frac{A_k^{(i,j)}}{G}}{\left(S^i\right)^2} \notag\\
        &= \frac{w_k}{S^i} \left( 1 - \frac{1}{G} - \frac{1}{G} A_\text{DVAO}^{(i,j)} A_k^{(i,j)} \right).\notag
    \end{align}
    Substituting the definition $\frac{w_k}{S^i} = \frac{\tilde{w}_k}{\sigma_k^i}$ into the equation completes the proof.
\end{proof}

\section{Implementation Details}
\label{appendix:implementation_details}

For training dataset, we use DAPO-MATH-17K\footnote{\url{https://huggingface.co/datasets/BytedTsinghua-SIA/DAPO-Math-17k}} for mathematical reasoning task, which consists of 17k prompts, each paired with an interger as the answer, and the same training dataset from ToolRL, which consists of 2k samples from ToolACE \citep{toolace}, 1k samples from Hammer \citep{hammer}, and 1k samples from xLAM \citep{xlam}, each training instance contains a question and its corresponding groud-truth tool calls. For the reward design, we use the accuracy reward $r_\text{acc}$ and the length reward $r_\text{length}$ for mathematical reasoning task, and use the accuracy reward $r_\text{acc}$ and the format reward $r_\text{format}$ for tool-use task. Specifically, $r_\text{length} \in \{0, 1\}$ checks whether the model's output remains within the target length $l$, which is set 4,000 tokens for all remaining experiments, and $r_\text{format} \in \{0,1\}$ checks whether the model't output satisfies the required structure and contains all necessary fields in the correct order. All rewards are constrained within the range $[0,1]$ to maintain consistency with the preceding discussion, while the final reward or advantage is computed via a convex combination with coefficients $\{w_k\}_{k=1}^n$ and $\sum_k w_k = 1$. Unless otherwise specified, the coefficients corresponding to all rewards are equal. We implement our proposed DVAO and conduct all experiments based on verl \citep{verl} framework. For hyperparameters, we utilize the AdamW \citep{adamw} optimizer with a constant learning rate of $1 \times 10^{-6}$. For rollout, the prompt batch size is 128 and we sample $G=16$ responses for each prompt. For training, we train 500 steps to ensure convergence. The maximum number of tokens for generation is set to 8,192 tokens. We report avg@16 results for mathematical reasoning task. The inference hyperparameters of evaluation are set to temperature 0.6 and top-p 0.95. We conduct all experiments on a server with 8$\times$NVIDIA H20-3e GPUs and an Intel$^\text{®}$ Xeon$^\text{®}$ Platinum 8575C CPU.

\section{Limitations and Future Work}
\label{appendix:limitation}

While DVAO effectively addresses the limitations of fixed scalarization in multi-reward GRPO, there are a few boundaries to consider. First, the accuracy of DVAO's dynamic weighting relies on the empirical variance estimation within a rollout group ($G$). In our experiments, a standard group size of $G=16$ provided highly robust signals. However, for extremely large models where hardware memory constraints force very small group sizes (e.g., $G \le 4$), the intra-group variance estimation might become noisy. Future work could explore incorporating historical momentum or cross-batch moving averages to stabilize variance estimation under extreme memory constraints. Second, our empirical evaluations primarily focus on dual-objective scenarios (e.g., accuracy and length/format). Although our theoretical proofs mathematically hold for an arbitrary number of $n$ rewards, the empirical optimization dynamics in hyper-dimensional reward spaces—such as simultaneously aligning helpfulness, harmlessness, style, length, and tool-use—remain an open question for future exploration. Lastly, because DVAO inherently amplifies learning signals based on variance, its efficacy is tied to the quality of the underlying reward functions. If a poorly designed auxiliary reward exhibits artificially high variance due to noise rather than meaningful learning signals, DVAO may inadvertently up-weight it. Thus, while DVAO eliminates the need for manual weight tuning, it still operates optimally alongside reasonably well-calibrated individual reward definitions.



\end{document}